\DeclareMathOperator*{\argmin}{arg\,min}
\def\one{\mbox{1\hspace{-4.25pt}\fontsize{12}{14.4}\selectfont\textrm{1}}}
\begin{document}
\title{Deceptive Reinforcement Learning Under Adversarial Manipulations on Cost Signals}
%
%
\author{Yunhan Huang\inst{1}\orcidID{0000-0002-4395-0642} \and
Quanyan Zhu\inst{1}\orcidID{0000-0002-0008-2953}}
\authorrunning{Y. Huang and Q. Zhu.}
%
\institute{New York University, New York City, NY 10003, USA \\
\email{\{yh.huang,qz494\}@nyu.edu}}
\maketitle  
\begin{abstract}
This paper studies reinforcement learning (RL) under malicious falsification on cost signals and introduces a quantitative framework of attack models to understand the vulnerabilities of RL. Focusing on $Q$-learning, we show that $Q$-learning algorithms converge under stealthy attacks and bounded falsifications on cost signals. We characterize the relation between the falsified cost and the $Q$-factors as well as the policy learned by the learning agent which provides fundamental limits for feasible offensive and defensive moves. We propose a robust region in terms of the cost within which the adversary can never achieve the targeted policy. We provide conditions on the falsified cost which can mislead the agent to learn an adversary's favored policy. A numerical case study of water reservoir control is provided to show the potential hazards of RL in learning-based control systems and corroborate the results. 

\keywords{Reinforcement Learning  \and Cybersecurity \and Q-Learning \and Deception and Counterdeception \and Adversarial Learning.}
\end{abstract}
\section{Introduction}

Reinforcement Learning (RL) is a paradigm for making online decisions in uncertain environment. Recent applications of RL algorithms to Cyber-Physical Systems enables real-time data-driven control of autonomous systems and improves the system resilience to failures.  However, the integration of RL mechanisms also exposes CPS to new vulnerabilities. One type of threats arises from the feedback architecture of the RL algorithms depicted in Fig. \ref{fig:SystemLevelAttacks}. An adversary can launch a man-in-the-middle attack to delay, obscure and manipulate the observation data that are needed for making online decisions. This type of adversarial behavior poses a great threat to CPS. For example, self-driving platooning vehicles can collide with each other when their observation data are manipulated \cite{behzadan2018adversarial}. Similarly, drones can be weaponized by terrorists to create chaotic and vicious situations where they are commanded to collide to a crowd or a building.

Hence it is imperative to understand the adversarial behaviors of RL and establish
a theoretic framework to analyze the impact of the attacks on RLs. One key aspect that makes RL security unique is its feedback architecture which includes components of sensing, control, and actuation as is shown in  Fig. \ref{fig:SystemLevelAttacks}. These components are subject to different types of cyber threats. For example, during the learning process, agent learns optimal policy from sequential observations from the environment. An adversary may perturb the environment to deteriorate the learning results. This type of attack is called environment attack. Agents observe the environment via their sensors. But the sensory observation of the state may be delayed, perturbed, or falsified under malicious attacks which are usually called sensors attack. There are also actuator attacks and attacks on reward/cost signals. The latter refers to manipulation of the reward signal produced by the environment in response to the actions applied by a RL agent, which can significantly affect the learning process. Take a RL-based Unmanned Aerial Vehicle (UAV) as an example, if the reward depends on the distance of the UAV to a desired destination measured by GPS coordinates, spoofing of GPS signals by the adversary may result in incorrect reward/cost signals. 



\begin{figure}\centering
\includegraphics[width=0.7\textwidth]{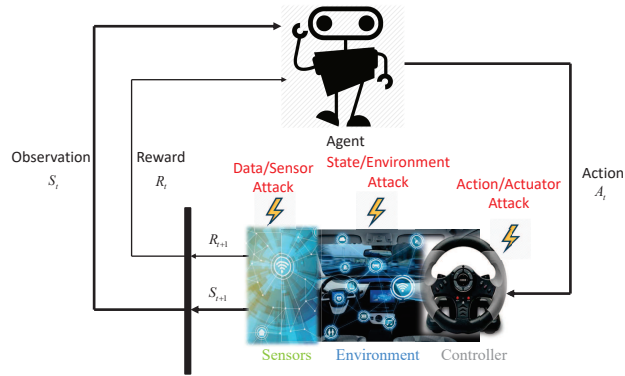}
\caption{Main components of a RL agent and potential attacks that can be applied to these components.} \label{fig:SystemLevelAttacks}
\end{figure}

In this paper, we study RL under malicious manipulation of cost signals from an offensive perspective where an adversary/attacker maliciously falsifies the cost signals. We first introduce a general formulation of attack models by defining the objectives, information structure and the capability of an adversary. We focus our research on a class of $Q$-learning algorithm and aim to address  two fundamental questions. The first one is on the impact of the falsification of cost signals on the convergence of $Q$-learning algorithm. The second one is on how the RL algorithm can be misled under malicious falsifications. We show that under stealthy attacks and bounded falsifications on the cost signals, the $Q$-learning algorithm converges almost surely. If the algorithm converges, we characterize the relationship between the falsified cost and the limit of $Q$-factors by an implicit map. We show that the implicit map has several useful properties including differentiability, Lipschitz continuity etc, which help to find fundamental limits of adversarial behavior. In particular, from the implicit map, we study how the falsified cost affect the policy that agents learn. We show that the map is uniformly Lipschitz continuous with an explicit Lipschitz constant and based on this, we characterize a robust region where the adversary can never achieve his desired policy if the falsified cost stays in the robust region. The map is shown to be Fr\'echet differentiable almost everywhere and Fr\'echet derivative is explicitly characterized which is independent of the falsified cost. The map has `piece-wise linear' property on a normed vector space. The derivative and `piece-wise linear' property can be utilized by the adversary to drive the $Q$-factors to a desired region by falsifying cost signals properly. We show that once the falsified cost satisfies a set of ineqialities, the RL agent can be mislead to learn the policy manipulated by the adversary. Further, we give conditions under which the adversary can attain any policy even if the adversary is only capable of falsifying the cost at a subset of the state space. In the end, An example is presented to illustrate potential hazards that might be caused by malicious cost falsification. The main contributions of our paper can be summarized as follows:
\begin{enumerate}
    \item We establish a theoretic framework to study strategic manipulation/falsifications on cost signals in RL and present a set of attack models on RL. 
    \item We provide an analytical results to understand how falsification on cost singals can affect $Q$-factors and hence the policies learned by RL agents. 
    \item We characterize conditions on deceptively falsified cost signals under which $Q$-factors learned by agents can produce the policy that adversaries aim for.
    \item We use a case study of water reservoir to illustrate the severe damages of insecure RL that can be inflicted on critical infrastructures and demonstrate the need for defense mechanisms for RL. 
\end{enumerate}

\subsection{Related Works}
Very few works have explicitly studied security issues of RL \cite{behzadan2018faults}. There is a large literature on adversarial machine learning, whose focus is on studying the vulnerability of supervised learning. However, we aim to provide a fundamental understanding of security risks of RL which is different from both supervised learning and unsupervised learning \cite{sutton1998introduction}. So, there remains a need for a solid theoretic foundation on security problems of RL so that many critical applications would be safeguarded from potential RL risks.

One area relevant to security of RL is safe RL \cite{garcia2015comprehensive}, which aims to ensure that agents learn to behave in compliance with some pre-defined criteria. The security problem, however, is concerned with settings where an adversary intentionally seeks to compromise the normal operation of the system for malicious purposes \cite{behzadan2018faults}. Apart from the distinction between RL security and safe RL, the difference between RL security and the area of adversarial RL also exists. The adversarial RL is usually studied under multi-agent RL settings, in which agents aim to maximize their returns or minimize their cost in competition with other agents. 

There are two recent works that have studied inaccurate cost signals. In \cite{everitt2017reinforcement}, Everitt et. al. study RL for Markov Decision Process with corrupted reward channels where due to some sensory errors and software bugs, agents may get corrupted reward at certain states. But their focus is not on security perspectives and they look into unintentional perturbation of cost signals. In \cite{wang2018reinforcement}, Wang et. al. have studied $Q$-learning with perturbed rewards where the rewards received by RL agents are perturbed with certain probability and the rewards take values only on a finite set. They study unintentional cost perturbation from a robust perspective other than a security perspective. Compared the two works mentioned above, our work studies RL with falsified cost signals from a security point of view and we develop theoretical underpinnings to characterize how the falsified cost will deteriorate the learning result.

The falsification of cost/reward signals can be viewed as one type of deception mechanisms. The topic of defensive deception has bee surveyed in \cite{pawlick2017game}, which includes a taxonomy of deception mechanisms and a review of game-theoretic models. Game and decision-theoretic models for deception have been studied in various contexts \cite{zhang2019game,horak2017manipulating}, including honeypots \cite{pawlick2018modeling,jeffWiopt}, adversarial machine learning \cite{zhang2018gameML,zhang2015secure}, moving target defense \cite{zhu2013game,clark2012deceptive}, and cyber-physical control systems \cite{zhu2015game,pawlick2018istrict,pawlick2017strategic,rass2017physical}. In this work, we extend the paradigm of cyber deception to reinforcement learning and establish a theoretical foundation for understanding the impact and the fundamental limits of such adversarial behaviors.

\subsection{Organization of the Paper}
In Section \ref{PrePro}, we present preliminaries and formulate a general framework that studies several attack models. In Section \ref{AnaQ}, we analyze the $Q$-learning algorithm under adversarial manipulations on cost. We study under what conditions the $Q$-learning algorithm converges and where it converges to. In Section \ref{ExamNum}, we present an example to corroborate the theoretical results and their implications in the security problems of RL.


\section{Preliminaries and Problem Formulation}\label{PrePro}
\subsection{Preliminaries}
Consider one RL agent interacts with an unknown environment and attempts to minimize the total of its received costs. The environment is formalized as a Markov Decision Process (MDP) denoted by $\langle \mathcal{S}, \mathcal{A}, c, \mathcal{P},\beta \rangle$. The MDP $\{\Phi(t):t\in\mathbb{Z}\}$ takes values in a finite state space $\mathcal{S}=\{1,2,...,S\}$ and is controlled by a sequence of actions (sometimes called a control sequence) $\mathbf{Z}=\{Z(t):t\in\mathbb{Z}\}$ taking values in a finite action space $\mathcal{A}=\{a_1,...,a_A\}$. Throughout this paper, we use the term action sequence and control sequence interchangeably. In our setting, we are interested in stationary policies where the control sequence takes the form $Z(t)=w(\Phi(t))$, where the feedback rule $w$ is a function $w:\mathcal{S}\rightarrow \mathcal{A}$. To emphasize the policy $w$, we denote $\mathbf{Z}_w=\{Z_w(t)\coloneqq w(\Phi(t)):t\in\mathbb{Z}\}$. According to a transition probability kernel $\mathcal{P}$, the controlled transition probabilities are given by $p(i,j,a)$ for $i,j\in \mathcal{S},a\in \mathcal{A}$. Commonly $\mathcal{P}$ is unknown to the agent.

Let $c: \mathcal{S}\times \mathcal{A} \rightarrow \mathbb{R} $ be the one-step cost function, and consider the infinite horizon discounted cost control problem of minimizing over all admissible $\mathbf{Z}$ the total discounted cost $
J(i,\mathbf{Z})=\mathbf{E}[ \sum_{t=0}^\infty \beta^t c(\Phi(t),Z(t))|\Phi(0)=i ],$ where $\beta\in(0,1)$ is the discount factor. The minimal value function is defined as $V(i)=\min J(i,\mathbf{Z})$, where the minimum is taken over all admissible control sequences $\mathbf{Z}$. The function $V$ satisfies the dynamic programming equation \cite{bertsekas1996neuro}
$$
V(i)=\min_a \Big[ c(i,a)+\beta \sum_j p(i,j,a)V(j) \Big],\ \ i\in \mathcal{S}
$$
and the optimal control minimizing $J$ is given by the stationary policy defined through the feedback law $w^*$ given by
$w^*(i)\coloneqq \argmin_a [ c(i,a) + \beta \sum_j p(i,j,a) V(j)]$, $i\in \mathcal{S}.$ If we define $Q$-values via
$$
Q(i,a)=c(i,a)+\beta \sum_j p(i,j,a)V(j), i\in \mathcal{S}, a\in \mathcal{A},
$$
then $V(i)=\min_a Q(i,a)$and the matrix $Q$ satisfies
\begin{equation}\label{QDPEquation}
   Q(i,a)=c(i,a) + \beta \sum_j p(i,j,a)\min_b Q(j,b),\ \ i\in \mathcal{S}, a\in \mathcal{A}. 
\end{equation}
If the matrix $Q$ defined in (\ref{QDPEquation}) can be computed, e.g., using value iteration, then the optimal control policy can be found by $w^*(i)=\argmin_a Q(i,a),i\in\mathcal{S}$. When transition probabilities are unknown, we can use a variant of stochastic approximation known as the $Q$-learning algorithm proposed in \cite{watkins1992q}. The learning process is defined through the recursion
\begin{equation}\label{QLAl}
\begin{aligned}
&Q_{n+1}(i,a) = Q_n(i,a) + a(n) \times
\Big[ \beta \min_b Q_n(\Psi_{n+1}(i,a),b) + c(i,a) - Q_n(i,a) \Big],
\end{aligned}
\end{equation}
$i\in \mathcal{S}, a\in \mathcal{A},$ where $\Psi_{n+1}(i,a)$ is an independently simulated $\mathcal{S}$-valued random variable with law $p(i,\cdot,a)$.

\subsubsection{Notations}
An indicator function $\one_C$ is defined as $\one_C(x)=1$ if $x\in C$, and $\one_C(x)=0$ otherwise. Denote $\mathbf{1}_i\in\mathbb{R}^S$ a vector with $S$ components whose $i$th component is $1$ and other components are $0$. The true cost at time $t$ is denoted by the shorthand notion $c_t\coloneqq c(\Phi(t),Z(t))$. For a mapping $f:\mathbb{R}^{S\times A}\rightarrow \mathbb{R}^{S\times A}$, define $f_{ia}:\mathbb{R}^{S\times A}\rightarrow\mathbb{R}$ that maps $\mathbb{R}^{S\times A}$ to $\mathbb{R}$ where for any $Q\in\mathbb{R}^{S\times A}$, we have $[f(Q)]_{i,a}=f_{ia}(Q)$ and $[f(Q)]_{i,a}$ is the $i$th component and $a$th column of $f(Q)$. The inverse of $f$ is denoted by $f^{-1}$. Given a set $\mathcal{V} \subset \mathbb{R}^{S\times A}$, $f^{-1}(\mathcal{V})$ is referred to the set $\{c: f(c)\in\mathcal{V} \}$. Denote $\mathcal{B}(c;r)\coloneqq\{\tilde{c}:\Vert \tilde{c}-c\Vert< r  \}$ an open ball in a normed vector space with radius $r$ and center $c$. Here and in later discussion, $\Vert \cdot \Vert$ refers to the maximum norm. 

Given $c\in\mathbb{R}^{S\times A}$ and a policy $w$, denote $c_w\in\mathbb{R}^{S}$ a vector whose $i$th component is $c(i,w(i))$ for any $i\in\mathcal{S}$. Define $c_a\in\mathbb{R}^S$ as a vector whose $i$th component is $c(i,a)$. We define $Q_{w}$, $Q_a$ in the same way. For transition probability, we define $P_w\in\mathbb{R}^{S\times S}$ as $[P_{w}]_{i,j}=p(i,j,w(i))$ and $P_{ia}=(p(i,1,a),p(i,2,a),...,p(i,S,a))^T\in\mathbb{R}^S$. Define $P_a\in\mathbb{R}^{S\times S}$ as the matrix whose components are $[P_a]_{i,j}=p(i,j,a)$. 

\subsection{General Attack Models}\label{GenAttModel}
Under malicious attacks,  the RL agent will not be able to observe the true cost feedback from the environment. Instead, the agent is given a cost signal that might be falsified by the attacker. Consider the following MDP with falsified cost (MDP-FC) denoting as $\langle \mathcal{S}, \mathcal{A}, c, \tilde{c},\mathcal{P},\beta \rangle$. In MDP-FC, at each time $t$, instead of observing $c_t\in\mathbb{R}$ directly, the agent only observes a falsified cost signal denoted by $\tilde{c}_t\in \mathbb{R}$. The remaining aspects of the MDP framework stay the same. 

Attack models can be specified by three components: objective of an adversary, actions available to the adversary, and information at his disposal. The adversary's task here is to design falsified cost signals $\tilde{c}$ based on his information structure and the actions available to him so that he can achieve certain objectives.

\textit{Objective of Adversary:}  One possible objective of an adversary is to maximize the agent's cost while minimizing the cost of attacks. This type of objectives can be can captured by a cost function 
$$
\max_{\tilde{c}}\ \ \mathbf{E}\Big[\sum_{t=0}^\infty \beta^t c(\Phi(t),Z_{w(\tilde{c})}(t)) \Big]- \textrm{AttackCost}(\tilde{c}).
$$
The other adversarial objectives would be to drive the MDP to a targeted process or to mislead the agent to learn certain policies the attacker aims for. Let $w(\tilde{c})$ denote the policy learned by the agent under falsified cost signals $\tilde{c}$ and let $w^\dagger$ denote the policy that an attacker aims for. We can capture the objective of such a deceptive adversary by 
\begin{equation}\label{AttackPolicyObjctive}
\max\limits_{\tilde{c}}\ \ \one_{\{w^\dagger\} }( w(\tilde{c}) ) - \textrm{AttackCost}(\tilde{c}).
\end{equation}

Here, the second term $\textrm{AttackCost}(\tilde{c})$  serves as a measure for the cost of attacking while the first term indicates whether the agent learns the policy $w^\dag$ or not. We can, for example, define $ \textrm{AttackCost}(\tilde{c})= \sum_{t=0}^\infty \alpha^t d(c_t,\tilde{c}_t),$
where $d(\cdot,\cdot)$ is a metric, $\alpha$ is a discount factor. If $d$ is a discrete metric, then $\sum_{t=0}^T d(c_t,\tilde{c}_t)$ counts the number of times of cost signals being falsified before time $T$.
Note that here, $\tilde{c}$ represents all the possible ways that an adversary can take to generate falsified signals.

\textit{Information:} It is important to specify the information structure of an adversary which determines different classes of the attacks an adversary can launch. We can categorize them as follows.

\begin{definition}
\begin{enumerate}
    \item An attacker is called an omniscient attacker if the information the attacker has at time $t$, denoted by, $\mathcal{I}_t$, is defined as 
    $$
    \mathcal{I}_t=\{\mathcal{P},\Phi(\tau),Z(\tau),c:\tau\leq t\}.
    $$
    \item An attacker is called a peer attacker if the attacker has only access to the knowledge of what the agent knows at time $t$. That means
    $$
    \mathcal{I}_t=\{\Phi(\tau),Z(\tau),c_\tau:\tau\leq t \}
    $$
    \item An attacker is called an ignorant attacker if at time $t$, he only knows the cost signals before time $t$, i.e.,
    $$
    \mathcal{I}_t=\{c_\tau:\tau\leq t\}
    $$
    \item An attacker is called a blind attacker if the information the attacker has at time $t$, denoted by $\mathcal{I}_t$, is defined as 
    $$
    \mathcal{I}_t=\varnothing.
    $$
\end{enumerate}
\end{definition}

\begin{remark}
There are many other situations in terms of information sets of an attacker that we can consider. In the definition of an omniscient attacker,  $c$ represents the true cost at every state-action pair. One should differentiate it from $c_\tau$. The latter means the true cost generated at time $\tau$. That is to say an omniscient attack knows the true cost at every state action pair $(i,a)$ for all $t$.
\end{remark}

\textit{Actions Available:} Even if an adversary can be omniscient, it does not mean that he can be omnipotent. The actions available to an adversary need to be defined. For example, the attacker can only create bounded perturbations to true cost signals. In some cases, the action of an adversary may be limited to changing the sign of the cost at certain time or he can only falsify the cost signals at certain states in the subset $\mathcal{S}'\subset \mathcal{S}$.

The constraints on the actions available to an attacker can also be captured by the attack cost. The cost for the type of attacks whose actions are constrained to a subset $\mathcal{S}'$ can be captured by the following
    $$
    \begin{aligned}
     \textrm{AttackCost}(\tilde{c}) =\begin{cases} 0\ \ \ \ \textrm{if\ }\tilde{c}_t = c_t \coloneqq c(\Phi(t),Z(t)),\textrm{for }\Phi(t)\in S\backslash \tilde{S},\forall t\\
     \infty\ \ \textrm{Otherwise}.
     \end{cases}
     \end{aligned}
    $$

Moreover, the generation of falsified costs relies heavily on the information an attacker has. If the attacker is a peer attacker or an omniscient attacker, the falsified signal $\tilde{c}$ can be generated through a mapping $C:\mathcal{S}\times \mathcal{A} \times \mathbb{R}\rightarrow \mathbb{R}$, i.e., $\tilde{c}_t=C(\Phi(t),Z(t),c_t)$. If the attacker only knows the state and the cost,  $\tilde{c}$ can be generated by the mapping $C:\mathcal{S}\times \mathbb{R}\rightarrow \mathbb{R}$. If the attacker is ignorant, we have $C:\mathbb{R}\rightarrow \mathbb{R}$, then $\tilde{c}_t=C(c_t)$.

\begin{definition}[Stealthy Attacks]\label{StealthyAttack}
If $\tilde{c}_t$ takes the same value for the same state-action pair $(\Phi(t),Z(t))$ for all $t\in\mathbb{Z}$, i.e., for $t\neq\tau$, if for $(\Phi(t),Z(t))=(\Phi(\tau),Z(\tau))$, we have $\tilde{c}_t=\tilde{c}_\tau$, then we say that the attacks on the cost signals are stealthy.
\end{definition}

The definition states that the cost falsification remains consistent for the same state-action pairs. In later discussions, we focus on stealthy attacks, which is a class of attacks that are hard to detect. 
Under stealthy attackers, the falsified cost $\tilde{c}$ can be viewed as a falsified cost matrix of dimension $S\times A$. At time $t$, the cost received by the RL agent is $\tilde{c}(\Phi(t),Z(t))$.

\subsection{$Q$-Learning with Falsified Cost}

If the RL agent learns an optimal policy by $Q$-learning algorithm given in (\ref{QLAl}), then under stealthy attacks on cost, the algorithm can be written as 
\begin{equation}\label{FalQLe}
\begin{aligned}
Q_{n+1}(i,a) = Q_n(i,a) + a(n) \times \Big[ \beta \min_b Q_n(\Psi_{n+1}(i,a),b) + \tilde{c}(i,a) - Q_n(i,a) \Big].
\end{aligned}
\end{equation}
Note that if the attacks are not stealthy, we need to write $\tilde{c}_n$ in lieu of $\tilde{c}(i,a)$. There are two important questions regarding the $Q$-learning algorithm with falsified cost (\ref{FalQLe}): (1) Will the sequence of $Q_n$-factors converge? (2) Where will the sequence of $Q_n$ converge to? We will address these two issues in next section.

Suppose that the sequence $Q_n$ generated by the $Q$-learning algorithm (\ref{FalQLe}) converges. Let $\tilde{Q}^*$ be the limit, i.e., $\tilde{Q}^* = \lim_{n\rightarrow\infty} Q_n$. Suppose the objective of an adversary is to induce the RL agent to learn a particular policy $w^\dagger$. The adversary's problem then is to design $\tilde{c}$ by applying the actions available to him based on the information he has so that the limit $Q$-factors learned from the $Q$-learning algorithm produce the policy targeted by the adversary $w^\dagger$, i.e, $\tilde{Q}^*\in \mathcal{V}_{w^\dagger} $, where 
$$
\mathcal{V}_{w}\coloneqq \{Q\in\mathbb{R}^{S\times A}: w(i)=\argmin_a Q(i,a), \forall i\in \mathcal{S} \}.
$$

In next section, we will develop theoretical underpinnings to address the issues regarding the convergence of (\ref{FalQLe}) and the attainability of the adversarial objectives. 

\section{Analysis of $Q$-Learning with Falsified Cost}\label{AnaQ}

\subsection{Convergence of $Q$-Learning Algorithm with Falsified Cost}

In $Q$-learning algorithm (\ref{QLAl}), to guarantee almost sure convergence, the agent usually takes tapering stepsize \cite{borkar2009stochastic} $\{a(n)\}$ which satisfies $0<a(n)\leq 1$, $n\geq 0$, and $\sum_n a(n) =\infty$, $\sum_n a(n)^2 < \infty$. Suppose in our problem, the agent takes tapering stepsize. To address the convergence issues, we have the following result.
\begin{lemma}\label{ConvergenceFalQ}
If an adversary performs stealthy attacks with bounded $\tilde{c}(i,a)$ for all $i\in\mathcal{S},a\in\mathcal{A}$, then the $Q$-learning algorithm with falsified costs converges to the fixed point of $\tilde{F}(Q)$ almost surely where the mapping $\tilde{F}:\mathbb{R}^{S\times A} \rightarrow \mathbb{R}^{S\times A}$ is defined as $\tilde{F}(Q)=[\tilde{F}_{ia}(Q)]_{i,a}$ with
$$
\tilde{F}_{ia}(Q)=\beta \sum_j p(i,j,a)\min_b Q(j,b) +\tilde{c}(i,a),
$$
and the fixed point is unique and denoted by $\tilde{Q}^*$.
\end{lemma}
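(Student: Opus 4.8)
The plan is to recognize that the falsified $Q$-learning recursion \eqref{FalQLe} is exactly the classical $Q$-learning recursion for the modified MDP $\langle \mathcal{S},\mathcal{A},\tilde{c},\mathcal{P},\beta\rangle$ in which the true cost $c$ has been replaced by the (fixed, state-action indexed) cost matrix $\tilde c$; this is precisely what stealthiness buys us, since Definition~\ref{StealthyAttack} guarantees $\tilde c_t$ depends only on $(\Phi(t),Z(t))$ and hence can be treated as a genuine cost function $\tilde c:\mathcal{S}\times\mathcal{A}\to\mathbb{R}$. Once this identification is made, convergence follows from the standard stochastic-approximation analysis of $Q$-learning (e.g.\ the contraction/ODE argument in \cite{borkar2009stochastic,bertsekas1996neuro,watkins1992q}): the recursion has the form $Q_{n+1}=Q_n+a(n)(\tilde F(Q_n)-Q_n+M_{n+1})$, where $M_{n+1}$ is a martingale-difference noise term with conditionally bounded second moment, and the tapering stepsize condition $\sum_n a(n)=\infty$, $\sum_n a(n)^2<\infty$ is in force.

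The key steps, in order, are: (i) verify that $\tilde F$ is a well-defined sup-norm contraction with modulus $\beta<1$ — for any $Q,Q'$ one has $|\tilde F_{ia}(Q)-\tilde F_{ia}(Q')|\le\beta\sum_j p(i,j,a)|\min_b Q(j,b)-\min_b Q'(j,b)|\le\beta\|Q-Q'\|$, using that $\sum_j p(i,j,a)=1$ and that the $\min$ operator is nonexpansive in the sup-norm; hence by Banach's fixed point theorem $\tilde F$ has a unique fixed point, which we call $\tilde Q^*$ — this settles the uniqueness claim. (ii) Cast \eqref{FalQLe} in the asynchronous stochastic-approximation form with noise $M_{n+1}(i,a)=\beta\min_b Q_n(\Psi_{n+1}(i,a),b)-\beta\sum_j p(i,j,a)\min_b Q_n(j,b)$; check that $\mathbf{E}[M_{n+1}\mid\mathcal{F}_n]=0$ since $\Psi_{n+1}(i,a)\sim p(i,\cdot,a)$ independently, and that $\mathbf{E}[\|M_{n+1}\|^2\mid\mathcal{F}_n]\le K(1+\|Q_n\|^2)$ because $\tilde c$ is bounded by hypothesis and the min of finitely many coordinates is Lipschitz. (iii) Invoke the convergence theorem for such schemes (iterates remain bounded a.s.\ — here one typically cites the stability criterion of Borkar–Meyn, using that the scaled limiting ODE $\dot Q=\tilde F(Q)-Q$ and its origin-limit ODE are globally asymptotically stable thanks to the contraction property — and then converge a.s.\ to the unique equilibrium of $\dot Q=\tilde F(Q)-Q$, i.e.\ to $\tilde Q^*$).

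I expect the main obstacle to be the a.s.\ boundedness of the iterates $\{Q_n\}$, which is the one nontrivial hypothesis of the stochastic-approximation convergence theorems; everything else (contraction, zero-mean noise, linear-growth noise bound) is routine. In the classical $Q$-learning proof this is handled either by the Borkar–Meyn stability theorem applied to the (homogeneous) limiting ODE $\dot Q=\tilde F_\infty(Q)-Q$ with $\tilde F_\infty(Q)=\beta\sum_j p(i,j,a)\min_b Q(j,b)$ — still a $\beta$-contraction with the origin as its unique globally asymptotically stable point — or by a direct induction showing $\|Q_n\|$ stays within a deterministic envelope determined by $\|\tilde c\|$ and $\beta$. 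Since $\tilde c$ is assumed bounded, the constant $\|\tilde Q^*\|\le\|\tilde c\|/(1-\beta)$ is finite, and the same estimates that prove boundedness in the unfalsified case carry over verbatim with $c$ replaced by $\tilde c$; I would therefore present the argument as "the analysis of \cite{bertsekas1996neuro,borkar2009stochastic} applies mutatis mutandis to the MDP with cost $\tilde c$," flagging only the contraction modulus and the boundedness of $\tilde c$ as the points where the stealthy/bounded-falsification hypotheses enter.
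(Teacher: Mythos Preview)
Your proposal is correct and follows essentially the same approach as the paper: recast \eqref{FalQLe} as $Q_{n+1}=Q_n+a(n)[\tilde F(Q_n)-Q_n+M_{n+1}]$, verify that $\tilde F$ is a $\beta$-contraction (hence has a unique fixed point by Banach) and that $M_{n+1}$ is a martingale-difference sequence, then invoke the Borkar/Borkar--Meyn stochastic-approximation convergence results. Your write-up is in fact more explicit than the paper's sketch about the boundedness-of-iterates issue and where the stealthiness and boundedness hypotheses enter.
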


\begin{proof}[Sketch of Proof]
 If the adversary performs stealthy attacks, the falsified costs for each state-action pair are consistent during the learning process. The $Q$ learning process thus can be written as (\ref{FalQLe}). Rewrite (\ref{FalQLe}) as $Q_{n+1}=Q_{n}+a(n)\big[ \tilde{h}(Q_n)+M(n+1) \big]$,
where $\tilde{h}(Q)\coloneqq\tilde{F}(Q)-Q$ and $M(n+1)$ is given as
$$
M_{ia}(n+1)
=\beta \Bigg( \min_b Q_n(\Psi_{n+1}(i,a),b)
-\sum_j p(i,j,a)(\min_b Q_n(j,b)) \Bigg),\ i\in S,a\in A.
$$
Note that for any $Q_1,Q_2$, $\tilde{h}(Q_1)-\tilde{h}(Q_2)$ and $\tilde{F}(Q_1)-\tilde{F}(Q_2)$ do not depend on the falsified costs. If the falsified costs are bounded, one can see that $\tilde{h}(Q)$ is Lipschitz. And $M(n+1)$ is a Martingale difference sequence. Following the arguments in \cite{borkar2009stochastic} (Theorem 2 Chapter 2) and Section 3.2 of \cite{borkar2000ode}, we can see the iterates of (\ref{FalQLe}) converges almost surely to the fixed points of $\tilde{F}$. Since $\tilde{F}$ is a contraction mapping with respect to the max norm, with contraction factor $\beta$ \cite{bertsekas1996neuro} (pp. 250), by Banach fixed point theorem (contraction theorem), $\tilde{F}$ admits a unique fixed point.
\end{proof}

It is not surprising that one of the conditions given in Lemma \ref{ConvergenceFalQ} that guarantees convergence is that an attacker performs stealthy attacks. The convergence can be guaranteed because the falsified cost signals are consistent over time for each state action pair. The uniqueness of $\tilde{Q}^*$ comes from the fact that if $\tilde{c}(i,a)$ is bounded for every $(i,a)\in \mathcal{S}\times\mathcal{A}$, $\tilde{F}$ is a contraction mapping. By Banach's fixed point theorem \cite{kreyszig1978introductory}, $\tilde{F}$ admits a unique fixed point. With this lemma, we conclude that an adversary can make the algorithm converge to a limit point by stealthily falsifying the cost signals.

\begin{remark}
Whether an adversary aims for the convergence of the $Q$-learning algorithm (\ref{FalQLe}) or not depends on his objective. In our setting, the adversary intends to mislead the RL agent to learn  policy $w^\dag$, indicating that the adversary promotes convergence and aim to have the limit point $\tilde{Q}^*$ lie in $\mathcal{V}_{w^\dag}$.
\end{remark}

\subsection{How is the Limit Point Affected by the Falsified Cost}

Now it remains to analyze, from the adversary's perspective, how to falsify the cost signals so that the limit point that algorithm (\ref{FalQLe}) converges to is desired by the adversary. In later discussions, we consider stealthy attacks where the falsified costs are consistent for the same state action pairs. Denote the true cost by matrix $c\in\mathbb{R}^{S\times A}$ with $[c]_{i,a}=c(i,a)$ and the falsified cost is described by a matrix $\tilde{c}\in\mathbb{R}^{S\times A}$ with $[\tilde{c}]_{i,a}=\tilde{c}(i,a)$. Given $\tilde{c}$, the fixed point of $\tilde{F}$ is uniquely decided, i.e., the point that the algorithm (\ref{FalQLe}) converges to is uniquely determined. Thus, there is a mapping $\tilde{c}\mapsto\tilde{Q}^*$ implicitly described by the relation $\tilde{F}(Q)=Q$. For convenience, this mapping is denoted by $f:\mathbb{R}^{S\times A}\rightarrow \mathbb{R}^{S\times A}$.

\begin{theorem}\label{LipContTheo}
Let $\tilde{Q}^*$ denote the $Q$-factor learned from algorithm (\ref{FalQLe}) with falsified cost signals and $Q^*$ be the $Q$-factor learned from (\ref{QLAl}) with true cost signals. There exists a constant $L<1$ such that
\begin{equation}\label{LipCont}
    \Vert \tilde{Q}^* - Q^* \Vert  \leq  \frac{1}{1-L}\Vert \tilde{c}- c \Vert,
\end{equation}
and $L=\beta$ where discounted factor $\beta$ has been defined in the MDP-FC problem.
\end{theorem}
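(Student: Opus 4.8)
The plan is to exploit that both $\tilde{Q}^*$ and $Q^*$ are fixed points of contraction mappings which differ \emph{only} by an additive cost term, and then apply the standard fact that the fixed point of a contraction depends Lipschitz-continuously on the map.

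First I would decompose $\tilde{F}(Q)=T(Q)+\tilde{c}$, where $[T(Q)]_{i,a}=\beta\sum_j p(i,j,a)\min_b Q(j,b)$ is \emph{independent} of the cost matrix, and likewise write the true-cost operator as $F(Q)=T(Q)+c$. Since $\mathcal{S}\times\mathcal{A}$ is finite, both $c$ and $\tilde{c}$ are bounded, so Lemma~\ref{ConvergenceFalQ} applies to each: $\tilde{F}$ and $F$ each possess a unique fixed point, namely $\tilde{Q}^*=f(\tilde{c})$ and $Q^*=f(c)$. Next I would record that $T$ is a $\beta$-contraction in the max norm: for any $Q_1,Q_2$, the map $Q\mapsto(\min_b Q(j,b))_j$ is nonexpansive, since $|\min_b Q_1(j,b)-\min_b Q_2(j,b)|\le\max_b|Q_1(j,b)-Q_2(j,b)|\le\Vert Q_1-Q_2\Vert$, and because $p(i,\cdot,a)$ is a probability vector this yields $\Vert T(Q_1)-T(Q_2)\Vert\le\beta\Vert Q_1-Q_2\Vert$ — precisely the contraction property cited from \cite{bertsekas1996neuro} (p.~250).

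Then the estimate is immediate from the two fixed-point identities and the triangle inequality:
\begin{align*}
\Vert \tilde{Q}^*-Q^*\Vert
&=\Vert \tilde{F}(\tilde{Q}^*)-F(Q^*)\Vert
=\Vert T(\tilde{Q}^*)-T(Q^*)+\tilde{c}-c\Vert\\
&\le \Vert T(\tilde{Q}^*)-T(Q^*)\Vert+\Vert \tilde{c}-c\Vert
\le \beta\,\Vert \tilde{Q}^*-Q^*\Vert+\Vert \tilde{c}-c\Vert,
\end{align*}
and since $\beta\in(0,1)$, rearranging gives $\Vert \tilde{Q}^*-Q^*\Vert\le\frac{1}{1-\beta}\Vert \tilde{c}-c\Vert$, i.e.\ the claim with $L=\beta$. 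In fact the same computation, applied to two arbitrary falsified costs, shows $f$ is globally Lipschitz with constant $1/(1-\beta)$, which is the form in which this bound will be used for the robust-region results.

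There is no deep obstacle here: the only step that genuinely requires care is verifying that the cost-independent part $T$ is a contraction with exactly the factor $\beta$, for which the nonexpansiveness of the componentwise $\min$ and the stochasticity of the transition kernel are the essential ingredients; everything else reduces to the triangle inequality and uniqueness of the fixed points from Lemma~\ref{ConvergenceFalQ}.
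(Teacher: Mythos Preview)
Your proof is correct and follows essentially the same route as the paper: both use that $\tilde{Q}^*$ and $Q^*$ are fixed points of the respective Bellman operators, subtract, apply the triangle inequality together with the $\beta$-contraction of the cost-independent part, and rearrange. Your explicit isolation of $T(Q)$ makes the contraction step slightly cleaner than the paper's phrasing, but the argument is the same.
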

\begin{proof}
Define $\tilde{F}(Q)$ as $\tilde{F}_{ia}(Q)=\beta \sum_j p(i,j,a)\min_b Q(j,b) +c(i,a).$ From Lemma \ref{ConvergenceFalQ}, we know that $\tilde{Q}^*$ and $Q^*$ satisfy $\tilde{Q^*}=\tilde{F}(\tilde{Q}^*)$ and $Q^*=F(Q^*)$. We have $\tilde{Q}^*-\tilde{Q}=\tilde{F}(\tilde{Q}^*)-F(Q^*)$. Since $\tilde{F}$ and $F$ are both contraction mappings, by triangle inequality, we have $\Vert \tilde{Q}^* - Q^*\Vert\leq L \Vert \tilde{Q}^*-Q^* \Vert + \Vert \tilde{c}-c \Vert$. Thus, we have
(\ref{LipCont}). And the contraction factor $L$ for $\tilde{F}$ and $F$ is $\beta$. 
\end{proof}

\begin{remark}
In fact, taking this argument just slightly further, one can conclude that falsification on cost $c$ using a tiny perturbation does not cause significant changes in the limit point of algorithm (\ref{QLAl}), $Q^*$. This feature indicates that an adversary cannot cause a significant change in the limit $Q$-factor by just a small perturbation in the cost signals. 
This is a feature known as stability that is observed in problems that possess contraction mapping properties. Also, Theorem \ref{LipContTheo} indicates that the mapping $\tilde{c}\mapsto \tilde{Q}^*$ is continuous, and to be more specific, it is uniformly Lipchitz continuous with Lipchitz constant ${1}/{(1-\beta)}$.
\end{remark}

With Theorem \ref{LipContTheo}, we can now characterize the minimum level of falsification an adversary needs to change the policy from the true optimal policy $w^*$ to the policy $w^\dag$ that the adversary aims for. First, note that $\mathcal{V}_w\subset \mathbb{R}^{S\times A}$ and it can be also written as 
\begin{equation}\label{VictorySet}
\mathcal{V}_w=\{Q\in\mathbb{R}^{S\times A}: Q(i,w(i))< Q(i,a), \forall i\in\mathcal{S}, \forall a\neq w(i) \}.
\end{equation}
We can easily see that for any given policy $w$, $\mathcal{V}_w$ is a convex set, hence connected. This is because for any $\lambda\in [0,1]$, if $Q_1,Q_2\in\mathcal{V}_w$, $\lambda Q_1 +(1-\lambda)Q_2\in \mathcal{V}_w$. Second, for any two different policies $w_1$ and $w_2$, $\mathcal{V}_{w_1}\cap \mathcal{V}_{w_2}=\varnothing$. Define the infimum distance between the true optimal policy $w^*$ and the adversary desired policy $w^\dag$ in terms of the $Q$-values by 
$$D(w^*,w^\dag)\coloneqq\inf_{Q_1\in \mathcal{V}_{w^*}, Q_2\in\mathcal{V}_{w^\dag}} \Vert Q_1 - Q_2 \Vert,$$
which is also the definition of the distance between two sets $\mathcal{V}_{w^*}$ and $\mathcal{V}_{\omega^\dag}$. Note that for $w^* \neq w^\dag$ (otherwise, the optimal policy $w^*$ is what the adversary desire, there is no incentive for the adversary to attack), $D(w^*,w^\dag)$ is always zero according to the definition of the set (\ref{VictorySet}). This counterintuitive result states that a small change in the $Q$-value may result in any possible change of policy learned by the  agent from the $Q$-learning algorithm (\ref{FalQLe}).  Compared with Theorem \ref{LipContTheo} which is a negative result to the adversary, this result is in favor of the adversary.

Similarly, define the point $Q^*$ to set 
$\mathcal{V}_{w^\dag}$ distance by 
$
D_{Q^*}(w^\dag)\coloneqq \inf_{Q\in\mathcal{V}_{w^\dag}} \Vert Q-Q^* \Vert.
$
Thus, if $\tilde{Q}^*\in\mathcal{V}_{w^\dag}$, we have 
\begin{equation}\label{RobustRegionIne}
0= D(w^*,w^\dagger)\leq D_{Q^*}(w^\dag)\leq \Vert \tilde{Q}^*-Q^* \Vert  \leq\frac{1}{1-\beta}\Vert \tilde{c} - c \Vert,
\end{equation}
where the first inequality comes from the fact that $Q^*\in \mathcal{V}_{w^*}$ and the second inequality is due to $\tilde{Q}^*\in V_{w^\dag}$. The \textit{robust region} for the true cost $c$ to the adversary's targeted policy $w^\dag$ is given by $\mathcal{B}(c;(1-\beta)D_{Q^*}(w^\dag) )$ which is an open ball with center $c$ and radius $(1-\beta)D_{Q^*}(w^\dag)$. That means the attacks on the cost needs to be `powerful' enough to drive the falsified cost $\tilde{c}$ outside the ball $\mathcal{B}(c;(1-\beta)D_{Q^*}(w^\dag) )$ to make the RL agent learn the policy $w^\dag$. If the falsified cost $\tilde{c}$ is within the ball, the RL agent can never learn the adversary's targeted policy $w^\dag$. The ball $\mathcal{B}(c;(1-\beta)D_{Q^*}(w^\dag))$ depends only on the true cost $c$ and the adversary desired policy $w^\dag$ (Once the MDP is given, $Q^*$ is uniquely determined by $c$). Thus, we refer this ball as the robust region of the true cost $c$ to the adversarial policy $w^\dag$. As we have mentioned in Section \ref{GenAttModel}, if the actions available to the adversary only allows him to perform bounded falsification on cost signals and the bound is smaller than the radius of the robust region, then the adversary can never mislead the agent to learn policy $w^\dag$.

\begin{remark}
First, in discussions above, the adversary policy $w^\dag$ can be any possible polices and the discussion remains valid for any possible policies. Second, set $\mathcal{V}_{w}$ of $Q$-values is not just a convex set but also an open set. We thus can see that $D_{Q^*}(w^\dag)>0$ for any $w^\dag \neq w^*$ and the second inequality in (\ref{RobustRegionIne}) can be replaced by a strict inequality. Third, the agent can estimate his own robustness to falsification if he can know the adversary desired policy $w^\dag$. For an omniscient attacker or attackers who have access to true cost signals, the attacker can compute the robust region of the true cost to his desired policy $w^\dag$ to evaluate whether the objective is feasible or not. When it is not feasible, the attacker can consider changing his objectives, e.g., selecting other favored policies that have a smaller robust region.
\end{remark}

We have discussed how falsification affects the change of $Q$-factors learned by the agent in a distance sense. The problem now is to study how to falsify the true cost in a right direction so that the resulted $Q$-factors fall into the favored region of an adversary. One difficulty of analyzing this problem comes from the fact that the mapping $\tilde{c}\mapsto \tilde{Q}^*$ is not explicit known. The relation between $\tilde{c}$ and $\tilde{Q}^*$ is governed by the $Q$-learning algorithm (\ref{FalQLe}). Another difficulty is that due to the fact that both $\tilde{c}$ and $\tilde{Q}^*$ lies in the space of $\mathbb{R}^{S\times A}$, we need to resort to Fr\'echet derivative or G\^ateaux derivative \cite{cheney2013analysis} (if they exist) to characterize how a small change of $\tilde{c}$ results in a change in $\tilde{Q}^*$.

From Lemma \ref{ConvergenceFalQ} and Theorem \ref{LipContTheo}, we know that $Q$-learning algorithm converges to the unique fixed point of $\tilde{F}$ and that $f:\tilde{c}\mapsto \tilde{Q}^*$ is uniformly Lipschitz continuous. Also, it is easy to see that the inverse of $f$, denoted by $f^{-1}$, exists since given $\tilde{Q}^*$, $\tilde{c}$ is uniquely decided by the relation $\tilde{F}(Q)=Q$. Furthermore, by the relation $\tilde{F}(Q)=Q$, we know $f$ is both injective and surjective and hence a bijection which can be simply shown by arguing that given different $\tilde{c}$, the solution of $\tilde{F}(Q)=Q$ must be different. This fact informs that there is a one-to-one, onto correspondence between $\tilde{c}$ and $\tilde{Q}^*$. One should note that the mapping $f:\mathbb{R}^{S\times A}\rightarrow \mathbb{R}^{S\times A}$ is not uniformly Fr\'echet differentiable on $\mathbb{R}^{S\times A}$ due to the $\min$ operator inside the relation $\tilde{F}(Q)=Q$. However, for any policy $w$, $f$ is Fr\'echet differentiable on $f^{-1}(\mathcal{V}_w)$ which is an open set and connected due to the fact that $\mathcal{V}_w$ is open and connected and $f$ is continuous.

\begin{proposition}\label{Diffentialf}
The map $f:\mathbb{R}^{S\times A}\rightarrow \mathbb{R}^{S\times A}$ is Fr\'echet differentiable on $\mathcal{V}_w$ for any policy $w$ and the Fr\'echet derivative of $f$ at any point $\tilde{c}\in \mathcal{V}_w$, denoted by $f'(\tilde{c})$, is a linear bounded map $G:\mathbb{R}^{S\times A} \rightarrow \mathbb{R}^{S\times A}$ that does not depend on $\tilde{c}$, and 
$Gh$ is given as
\begin{equation}\label{FrechetDerivative}
[Gh]_{i,a}= \beta P_{ia}^T(I-\beta P_w)^{-1}h_w + h(i,a) 
\end{equation}
for every $i\in\mathcal{S},a\in\mathcal{A}$.
\end{proposition}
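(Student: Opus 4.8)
The plan is to use the fact that on $f^{-1}(\mathcal{V}_w)$ the $\min$ inside the fixed-point relation $\tilde F(Q)=Q$ is resolved, which collapses $f$ to an honest linear map and makes Fr\'echet differentiability essentially free. Fix a policy $w$ and a point $\tilde c$ with $\tilde Q^*:=f(\tilde c)\in\mathcal V_w$ (equivalently, $\tilde c\in f^{-1}(\mathcal V_w)$). By Lemma~\ref{ConvergenceFalQ}, $\tilde Q^*$ is the unique solution of $\tilde Q^*(i,a)=\beta\sum_j p(i,j,a)\min_b\tilde Q^*(j,b)+\tilde c(i,a)$. Since $\tilde Q^*\in\mathcal V_w$, the description (\ref{VictorySet}) of $\mathcal V_w$ gives $\min_b\tilde Q^*(j,b)=\tilde Q^*(j,w(j))$ for every $j$, so the relation becomes the linear system $\tilde Q^*(i,a)=\beta P_{ia}^T\tilde Q^*_w+\tilde c(i,a)$; restricting to $a=w(i)$ and stacking over $i$ yields $\tilde Q^*_w=\beta P_w\tilde Q^*_w+\tilde c_w$.

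Next I would invert $I-\beta P_w$. Since $P_w$ is row-stochastic, $\Vert\beta P_w\Vert=\beta<1$, so $I-\beta P_w$ is invertible with $(I-\beta P_w)^{-1}=\sum_{k\ge0}\beta^k P_w^k$, and hence $\tilde Q^*_w=(I-\beta P_w)^{-1}\tilde c_w$. Substituting back into $\tilde Q^*(i,a)=\beta P_{ia}^T\tilde Q^*_w+\tilde c(i,a)$ gives, for every $(i,a)$,
\[
\tilde Q^*(i,a)=\beta P_{ia}^T(I-\beta P_w)^{-1}\tilde c_w+\tilde c(i,a).
\]
In other words, on the open set $f^{-1}(\mathcal V_w)$ the map $f$ coincides with the linear map $G:\mathbb{R}^{S\times A}\to\mathbb{R}^{S\times A}$ given by $[Gh]_{i,a}=\beta P_{ia}^T(I-\beta P_w)^{-1}h_w+h(i,a)$, where linearity in $h$ is clear because $h\mapsto h_w$ is a coordinate projection. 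The map $G$ is bounded (finite dimensions) and visibly does not involve $\tilde c$.

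Finally, for $h$ small enough that $\tilde c+h$ remains in $f^{-1}(\mathcal V_w)$ --- such a ball exists because $\mathcal V_w$ is open and $f$ is continuous by Theorem~\ref{LipContTheo} --- we get $f(\tilde c+h)-f(\tilde c)-Gh=G(\tilde c+h)-G\tilde c-Gh=0$, so the differentiability remainder vanishes identically and $f'(\tilde c)=G$, which is (\ref{FrechetDerivative}). The only step needing a little care is this neighborhood argument, i.e.\ transporting openness of $\mathcal V_w$ back through the continuous $f$ so that the linear formula is valid on an entire ball around $\tilde c$, together with the (routine) invertibility of $I-\beta P_w$; the rest is just the observation that a map which is linear on an open set is Fr\'echet differentiable there with that linear map as its derivative. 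As a byproduct this shows $f$ is \emph{piecewise linear}: it agrees with $G$ --- which depends only on $w$, not on $\tilde c$ --- on each cell $f^{-1}(\mathcal V_w)$.
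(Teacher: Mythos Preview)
Your proof is correct and follows essentially the same approach as the paper: both resolve the $\min$ using membership in $\mathcal V_w$, derive $\tilde Q^*_w=(I-\beta P_w)^{-1}\tilde c_w$, substitute back, and read off the linear map $G$. Your presentation is arguably a touch cleaner in that you note explicitly that $f$ \emph{equals} $G$ on the open set $f^{-1}(\mathcal V_w)$ (so the Fr\'echet remainder is identically zero), whereas the paper phrases it as a computation of $f(c+h)-f(c)$ and then lets $\Vert h\Vert\to0$; but this is a cosmetic difference, not a different method.
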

\begin{proof}
Suppose $c\in f^{-1}(\mathcal{V}_w)$ and $\tilde{c}=c+h\in f^{-1}(\mathcal{V}_w)$. By definition, $Q^*,\tilde{Q}^*\in\mathcal{V}_w$. By Lemma \ref{ConvergenceFalQ}, we have $\tilde{Q}^* = \tilde{F}(\tilde{Q}^*)$ and $Q^*=F(Q^*)$ which means
\begin{equation}\label{eq:Derivative}
\begin{aligned}
&\tilde{Q}^*(i,a)=\beta P_{ia}\tilde{Q}^*_w + \tilde{c}(i,a)=\beta P_{ia}\tilde{Q}^*_w + c(i,a) + h(i,a),\\
&Q^*(i,a)=\beta P_{ia} Q^*_w + c(i,a),\ \forall i\in\mathcal{S}, a\in\mathcal{A}.
\end{aligned}
\end{equation}
From (\ref{eq:Derivative}), we have $Q^*_w=\beta P_w Q^*_w + c_w$. Thus, $Q^*_w=(I-\beta P_w)^{-1}c_w$. Similarly, $\tilde{Q}^*_w=(I-\beta P_w)^{-1}(c_w + h_w)$, where $(I-\beta P_w)$ is invertible due to the fact that $\beta<1$ and $P_w$ is a stochastic matrix. Thus, $\tilde{Q}^*_w = Q^*_w + (I-\beta P_w)^{-1}h_w$. Substitute it into the first equation of (\ref{eq:Derivative}), one have
$$
\begin{aligned}
\tilde{Q}^*(i,a)&=\beta P_{ia}(Q^*_w + (I-\beta P_w)^{-1}h_w)+c(i,a)+h(i,a)\\
&=Q^*(i,a)+\beta P_{ia}(I-\beta P_w)^{-1}h_w+h(i,a).
\end{aligned}
$$
Then, one can see $\Vert f(c+h)-f(c)-Gh \Vert/ \Vert h\Vert \rightarrow 0$ as $\Vert h \Vert\rightarrow 0$.
\end{proof}

From Proposition \ref{Diffentialf}, we can see that $f$ is Fr\'echet differentiable on $f^{-1}(\mathcal{V}_w)$ and the derivative is constant, i.e., $f'(\tilde{c})=G$ for any $\tilde{c}\in f^{-1}(\mathcal{V}_w)$. Note that $G$ lies in the space of all linear mappings that maps $\mathbb{R}^{S\times A}$ to itself and $G$ is determined only by the discount factor $\beta$ and the transition kernel $\mathcal{P}$ of the MDP problem. The region where the differentiability may fail is  $f^{-1}(\mathbb{R}^{S\times A}\backslash (\cup_{w} \mathcal{V}_w))$, where $\mathbb{R}^{S\times A}\backslash (\cup_{w} \mathcal{V}_w)$ is the set $\{Q: \exists i, \exists a=a', Q(i,a)=Q(i,a')=\min_b Q(i,b) \}$. This set contains the places where a change of policy happens, i.e., $Q(i,a)$ and $Q(i,a')$ are both the lowest value among the $i$th row of $Q$. Also, due to the fact that $f$ is Lipschitz, by Rademacher's theorem, $f$ is differentiable almost everywhere (w.r.t. the Lebesgue measure).

\begin{remark}
One can view $f$ as a `piece-wise linear function' in the norm vector space $\mathbb{R}^{S\times A}$ instead of in a real line. Actually, if the adversary can only falsify the cost at one state-action pair, say $(i,a)$, while costs at other pairs are fixed, then for every $j\in\mathcal{S},b\in\mathcal{A}$, the function $\tilde{c}(i,a)\mapsto [\tilde{Q}^*]_{j,b}$ is a piece-wise linear function.
\end{remark}

Given any $c\in f^{-1}(\mathcal{V}_w)$, if an adversary falsifies the cost $c$ by injecting value $h$, i.e., $\tilde{c}=c+h$, the adversary can see how the falsification cause a change in $Q$-values. To be more specific, if $Q^*$ is the $Q$-values learned from cost $c$ by $Q$-learning algorithm (\ref{QLAl}), after the falsification $\tilde{c}$, the $Q$-value learned from $Q$-learning algorithm (\ref{FalQLe}) becomes $\tilde{Q}^*=Q^*+Gh$ if $\tilde{c}\in f^{-1}(\mathcal{V}_w)$. Then, an omniscient adversary can utilize (\ref{FrechetDerivative}) to find a way of falsification $h$ such that $\tilde{Q}^*$ can be driven to approach a desired set $\mathcal{V}_{w^\dag}$ bearing in mind that $D(w,w^\dag)=0$ for any two policies $w,w^\dag$. One difficulty is to see whether $\tilde{c}\in f^{-1}(\mathcal{V}_w)$ because the set $f^{-1}(\mathcal{V}_w)$ is now implicit. Thus, we resort to the following theorem.

\begin{theorem}\label{iffTheorem}
Let $\tilde{Q}^*\in\mathbb{R}^{S\times A}$ be the $Q$-values learned from the $Q$-learning algorithm (\ref{FalQLe}) with the falsified cost $\tilde{c}\in\mathbb{R}^{S\times A}$. Then $\tilde{Q}^* \in \mathcal{V}_{w^\dag}$ if and only if the falsified cost signals $\tilde{c}$ designed by the adversary satisfy the following conditions
\begin{equation}\label{FalCostConds}
\tilde{c}(i,a)> (\mathbf{1}_i - \beta P_{ia})^T(I-\beta P_{w^\dagger})^{-1}\tilde{c}_{w^\dagger}.
\end{equation}
for all $i\in \mathcal{S}$, $a\in \mathcal{A}\backslash \{w^\dag(i)\}$.
\end{theorem}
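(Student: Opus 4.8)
The plan is to use the fixed-point characterization from Lemma~\ref{ConvergenceFalQ}, which says that $\tilde{Q}^*$ is the unique solution of $\tilde{F}(Q)=Q$, i.e.
$$\tilde{Q}^*(i,a)=\beta\, P_{ia}^{T}\, v + \tilde{c}(i,a),\qquad i\in\mathcal{S},\ a\in\mathcal{A},$$
where $v\in\mathbb{R}^{S}$ has $j$th entry $v_j=\min_b \tilde{Q}^*(j,b)$. The key observation is that the event $\tilde{Q}^*\in\mathcal{V}_{w^\dag}$ is exactly the statement $v_j=\tilde{Q}^*(j,w^\dag(j))=[\tilde{Q}^*_{w^\dag}]_j$ for all $j$, which eliminates the $\min$ operator and renders the fixed-point system linear, so that $\tilde{Q}^*_{w^\dag}$ is available in closed form.

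First I would prove the ``only if'' direction. Assuming $\tilde{Q}^*\in\mathcal{V}_{w^\dag}$, restricting the fixed-point equation to the entries $a=w^\dag(i)$ and stacking over $i$ yields $\tilde{Q}^*_{w^\dag}=\beta P_{w^\dag}\tilde{Q}^*_{w^\dag}+\tilde{c}_{w^\dag}$. Since $\beta<1$ and $P_{w^\dag}$ is row-stochastic, $I-\beta P_{w^\dag}$ is invertible (Neumann series), hence $\tilde{Q}^*_{w^\dag}=(I-\beta P_{w^\dag})^{-1}\tilde{c}_{w^\dag}$. Writing out the defining inequalities of $\mathcal{V}_{w^\dag}$ from (\ref{VictorySet}), namely $\tilde{Q}^*(i,w^\dag(i))<\tilde{Q}^*(i,a)$ for $a\neq w^\dag(i)$, and substituting $\tilde{Q}^*(i,w^\dag(i))=\mathbf{1}_i^{T}\tilde{Q}^*_{w^\dag}$ and $\tilde{Q}^*(i,a)=\beta P_{ia}^{T}\tilde{Q}^*_{w^\dag}+\tilde{c}(i,a)$ gives $\tilde{c}(i,a)>(\mathbf{1}_i-\beta P_{ia})^{T}(I-\beta P_{w^\dag})^{-1}\tilde{c}_{w^\dag}$, which is precisely (\ref{FalCostConds}).

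For the ``if'' direction I cannot simply run this argument backwards, because the closed form $\tilde{Q}^*_{w^\dag}=(I-\beta P_{w^\dag})^{-1}\tilde{c}_{w^\dag}$ was itself obtained under the hypothesis $\tilde{Q}^*\in\mathcal{V}_{w^\dag}$. Instead I would build a candidate $\hat{Q}$: let $\hat{Q}_{w^\dag}\coloneqq(I-\beta P_{w^\dag})^{-1}\tilde{c}_{w^\dag}$ define the entries $\hat{Q}(i,w^\dag(i))$, and set $\hat{Q}(i,a)\coloneqq\beta P_{ia}^{T}\hat{Q}_{w^\dag}+\tilde{c}(i,a)$ for $a\neq w^\dag(i)$. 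Condition (\ref{FalCostConds}) is exactly the assertion that $\hat{Q}(i,w^\dag(i))<\hat{Q}(i,a)$ for every $i$ and every $a\neq w^\dag(i)$, so $\hat{Q}\in\mathcal{V}_{w^\dag}$ and therefore $\min_b\hat{Q}(j,b)=[\hat{Q}_{w^\dag}]_j$. Substituting this back into $\tilde{F}$ shows $\tilde{F}(\hat{Q})=\hat{Q}$: the rows $a\neq w^\dag(i)$ hold by construction, while the rows $a=w^\dag(i)$ hold because $(I-\beta P_{w^\dag})\hat{Q}_{w^\dag}=\tilde{c}_{w^\dag}$ rearranges to $\hat{Q}_{w^\dag}=\beta P_{w^\dag}\hat{Q}_{w^\dag}+\tilde{c}_{w^\dag}$. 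By uniqueness of the fixed point of $\tilde{F}$ in Lemma~\ref{ConvergenceFalQ}, $\hat{Q}=\tilde{Q}^*$, so $\tilde{Q}^*\in\mathcal{V}_{w^\dag}$.

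Everything reduces to elementary linear algebra once the $\min$ is removed; the one point that needs care — and what I would flag as the main obstacle — is this asymmetry between the two directions: the ``if'' half cannot be handled by a naive inversion and instead requires constructing $\hat{Q}$ explicitly and invoking uniqueness of the fixed point, since a priori we do not know that $w^\dag$ is greedy with respect to $\tilde{Q}^*$.
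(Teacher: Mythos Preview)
Your proposal is correct and follows the same route as the paper: for the ``only if'' direction you reproduce exactly the paper's argument (linearize the fixed-point equation at $a=w^\dag(i)$ to get $\tilde{Q}^*_{w^\dag}=(I-\beta P_{w^\dag})^{-1}\tilde{c}_{w^\dag}$, then read off (\ref{FalCostConds}) from the strict inequalities defining $\mathcal{V}_{w^\dag}$). For the ``if'' direction the paper's sketch simply appeals to the one-to-one correspondence $\tilde{c}\leftrightarrow\tilde{Q}^*$; your explicit construction of the candidate $\hat{Q}$ and verification that $\tilde{F}(\hat{Q})=\hat{Q}$, followed by uniqueness from Lemma~\ref{ConvergenceFalQ}, is precisely what that appeal unpacks to, and your observation about the asymmetry between the two directions is well taken.
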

\begin{proof}[Sketch of Proof] 
If $\tilde{Q}^*\in\mathcal{V}_{w^\dag}$, from proof of Proposition \ref{Diffentialf}, we know $\tilde{Q}^*_{w^\dag}=(I-\beta P_{w^\dag})^{-1} \tilde{c}_{w^\dag}$ and the $i$th component of $\tilde{Q}^*_{w^\dag}$ is strictly less than $\tilde{Q}^*(i,a)$ for each $a\in\mathcal{A}\backslash \{w^\dag(i)\}$. That means
$\tilde{Q}^*(i,a)>\mathbf{1}_i^T \tilde{Q}^*_{w^\dag}$ which gives us $(\ref{FalCostConds})$. Conversely, if $\tilde{c}$ satisfy conditions (\ref{FalCostConds}), $\tilde{Q}^*\in\mathcal{V}_{w^\dag}$ due to the one-to-one, onto correspondence between $\tilde{c}$ and $\tilde{Q}^*$.
\end{proof}

With the results in Theorem \ref{iffTheorem}, we can characterize the set $f^{-1}(\mathcal{V}_w)$. Elements in $f^{-1}(\mathcal{V}_w)$ have to satisfy the conditions given in (\ref{FalCostConds}). Also, Theorem \ref{iffTheorem} indicates that if an adversary intends to mislead the agent to learn policy $w^\dag$, the falsified cost $\tilde{c}$ has to satisfy the conditions specified in (\ref{FalCostConds}). Note that for $a=w^\dag(i)$, $\tilde{c}(i,w^\dag(i))\equiv (\mathbf{1}_i - \beta P_{iw^\dag(i)})^T(I-\beta P_{w^\dagger})^{-1}\tilde{c}_{w^\dagger}$.

If the objective of an omnisicent attacker is to induce the agent to learn  policy $w^\dag$ while minimizing his own cost of attacking, i.e., the attack's problem we have formulated in (\ref{AttackPolicyObjctive}) in Section \ref{GenAttModel}. Given $\textrm{AttackCost}(\tilde{c})=\Vert \tilde{c}-c\Vert$ where $c$ is the true cost, the attacker's problem is to solve the following minimization problem
\begin{equation}\label{AttackMinPro}
\begin{aligned}
\min_{\tilde{c}\in\mathbb{R}^{S\times A}}\ \ \ &\Vert \tilde{c}-c \Vert\ \ \ 
s.t.\ (\ref{FalCostConds})
\end{aligned}
\end{equation}

\begin{remark}
If the norm in the attacker's problem (\ref{AttackMinPro}) is a Frobenius norm, the attacker's problem is a convex minimization problem which can be easily solved by omniscient attackers using software packages like MOSEK \cite{mosek2015mosek}, CVX \cite{grant2008cvx} etc. If $\textrm{AttackCost}(\tilde{c})$ is the number of state-action pair where the cost has been falsified, i.e.,, $\textrm{AttakCost}(\tilde{c})=\sum_i \sum_a \one_{\{c(i,a)\neq\tilde c(i,a)\}}$, then the attacker's problem becomes a combinatorial optimization problem \cite{wolsey2014integer}.
\end{remark}

\begin{remark}
If the actions available to an adversary only allow the adversary to falsify the true cost at certain states $\mathcal{S}'\subset \mathcal{S}$ (or/and at certain actions $\mathcal{A}'\subset\mathcal{A} $), then the adversary's problem (\ref{AttackMinPro}) becomes
$$
\begin{aligned}
\min_{\tilde{c}\in\mathbb{R}^{S\times A}}\ \ \ &\Vert \tilde{c}-c \Vert\\
s.t.\ \ \ \ \ &(\ref{FalCostConds})\\
&\tilde{c}(i,a)=c(i,a)\ \forall i\in \mathcal{S}\backslash \mathcal{S}',a\in\mathcal{A}\backslash\mathcal{A}'.  \\
\end{aligned}
$$
However, if an adversary can only falsify at certain states $\mathcal{S}'$, the adversary may not be able to manipulate the agent to learn $w^\dag$.
\end{remark}

Without loss of generality, suppose that the adversary can only falsify the cost at a subset of states $\mathcal{S}'=\{1,2,...,S'\}$. We rewrite the conditions given in (\ref{FalCostConds}) into a more compact form:
\begin{equation}\label{FalCostCondsCompact}
    \tilde{c}_a \geq (I-\beta P_a)(I-\beta P_{w^\dag})^{-1} \tilde{c}_{w^\dag}, \forall\ a\in\mathcal{A},
\end{equation}
where the equality only holds for one component of the vector, i.e., the $i$-th component satisfying $w(i)=a$. Partition the vector $\tilde{c}_a$ and $\tilde{c}_{w^\dag}$ in (\ref{FalCostCondsCompact}) into two parts, the part where the adversary can falsify the cost denoted by $\tilde{c}^{fal}_a,\tilde{c}^{fal}_{w^\dag}\in\mathbb{R}^{S'}$ and the part where the adversary cannot falsify $c_a^{true},c_{w^\dag}^{true}\in \mathbb{R}^{S-S'}$.

\begin{equation}\label{PartionedCostCon}
\begin{bmatrix}
\tilde{c}^{fal}_a\\ \hdashline[2pt/2pt]
c_a^{true}
\end{bmatrix}
 \geq \left[
    \begin{array}{c;{2pt/2pt}c}
        R_a  & Y_a \\ \hdashline[2pt/2pt]
        M_a & N_a
    \end{array}
\right]
\begin{bmatrix}
\tilde{c}^{fal}_{w^\dag}\\ \hdashline[2pt/2pt]
c_{w^\dag}^{true}
\end{bmatrix},\ \forall\ a\in\mathcal{A}
\end{equation}
where 
$$
\left[
    \begin{array}{c;{2pt/2pt}c}
        R_a  & Y_a \\ \hdashline[2pt/2pt]
        M_a & N_a
    \end{array}
\right]\coloneqq
(I-\beta P_a) (I-\beta P_{w^\dag})^{-1},\ \ \forall\ a\in\mathcal{A}
$$
and $R_a\in\mathbb{R}^{S'\times S'}, Y_a\in\mathbb{R}^{S'\times (S-S')},M_a\in\mathbb{R}^{(S-S')\times S'}, N_a\in\mathbb{R}^{(S-S')\times (S-S')}$.
Note that the $i$th component of $\tilde{c}_{w^\dag(i)}^{fal}$ is equal to the $i$ component of $\tilde{c}_{w^\dag}^{fal}$. If the adversary aims to mislead the agent to learn $w^\dag$, the adversary needs to design $\tilde{c}^{fal}_a,a\in\mathcal{A}$ such that the conditions in (\ref{PartionedCostCon}) hold. Whether the conditions in (\ref{PartionedCostCon}) are easy for an adversary to achieve or not depends on the true costs $c^{true}_{a},a\in\mathcal{A}$. The following results state that under some conditions on the transition probability, no matter what the true costs are, the adversary can find proper $\tilde{c}^{fal}_a,a\in\mathcal{A}$ such that conditions (\ref{PartionedCostCon}) are satisfied. For $i\in\mathcal{S}\backslash\mathcal{S}'$, if $w(i)=a$, we remove the rows of $M_a$ that correspond to the state $i\in\mathcal{S}\backslash\mathcal{S}'$. Denote the new matrix after the row removals by $\bar{M}_a$.

\begin{theorem}\label{PartialStatesAttacks}
Define $H\coloneqq [\bar{M}_{a_1}^T\ \bar{M}_{a_2}^T\ \cdots\ \bar{M}_{a_A}^T]^T \in \mathbb{R}^{(A(S-S')-S')\times S'}$. If there exists $x\in\mathbb{R}^{S'}$ such that $Hx<0$, i.e.,  the column space of $H$ intersects the negative orthant of $\mathbb{R}^{A(S-S')-S'}$,  then for any true cost, the adversary can find $\tilde{c}^{fal}_a,a\in\mathcal{A}$ such that conditions (\ref{PartionedCostCon}) hold.
\end{theorem}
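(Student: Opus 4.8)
The plan is to strip away the portions of the system~(\ref{PartionedCostCon}) that either hold identically or can be satisfied for free by the adversary, reducing the whole feasibility question to a single strict linear inequality $Hx < v$ in the variable $x \coloneqq \tilde{c}^{fal}_{w^\dag} \in \mathbb{R}^{S'}$, with $v$ a constant vector determined by the (fixed) true cost and the MDP data.

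First I would deal with the equality rows. Write $q \coloneqq (I-\beta P_{w^\dag})^{-1}\tilde{c}_{w^\dag}$, so that~(\ref{FalCostCondsCompact}) reads $\tilde{c}_a \geq (I-\beta P_a)q$ for every $a$. Whenever $w^\dag(i)=a$ the $i$th row of $P_a$ equals the $i$th row of $P_{w^\dag}$, hence the $i$th row of $(I-\beta P_a)(I-\beta P_{w^\dag})^{-1}$ is $\mathbf{1}_i^T$ and $[(I-\beta P_a)q]_i = [\tilde{c}_{w^\dag}]_i = \tilde{c}(i,w^\dag(i))$; thus that row of the constraint is the tautology $\tilde{c}(i,w^\dag(i)) \geq \tilde{c}(i,w^\dag(i))$. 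This removes all equality rows from both blocks of~(\ref{PartionedCostCon}), leaving only the strict inequalities.

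Next I would discard the top block. Only entries of the form $\tilde{c}(j,w^\dag(j))$ enter $\tilde{c}_{w^\dag}$ (and hence $q$), so for $i\in\mathcal{S}'$ and $a\neq w^\dag(i)$ the falsifiable entry $\tilde{c}(i,a)=[\tilde{c}^{fal}_a]_i$ appears only on the left-hand side of its own constraint; once $\tilde{c}^{fal}_{w^\dag}$ is fixed, each surviving top-block inequality $[\tilde{c}^{fal}_a]_i > [R_a\tilde{c}^{fal}_{w^\dag}+Y_a c^{true}_{w^\dag}]_i$ holds simply by taking $\tilde{c}(i,a)$ large enough. What remains is the bottom block restricted to the rows with $w^\dag(i)\neq a$; writing $\bar N_a$ and $\bar c^{true}_a$ for $N_a$ and $c^{true}_a$ with the same rows deleted as in $\bar M_a$, these read $\bar M_a\tilde{c}^{fal}_{w^\dag} < \bar c^{true}_a - \bar N_a c^{true}_{w^\dag}$ for all $a$, i.e.\ exactly $Hx < v$ with $x = \tilde{c}^{fal}_{w^\dag}$ and $v = (\bar c^{true}_{a_1}-\bar N_{a_1}c^{true}_{w^\dag},\ \ldots,\ \bar c^{true}_{a_A}-\bar N_{a_A}c^{true}_{w^\dag})$, which is constant once $c$ is given.

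Finally I would use the hypothesis. It gives $x_0\in\mathbb{R}^{S'}$ with $Hx_0<0$ componentwise; taking $x = tx_0$ yields $Hx = t(Hx_0)$, whose components tend to $-\infty$ as $t\to+\infty$, so $H(tx_0)<v$ for all $t$ large enough. Setting $\tilde{c}^{fal}_{w^\dag}=tx_0$ for such a $t$, then choosing the remaining falsifiable entries large as above, produces a falsified cost satisfying~(\ref{PartionedCostCon}); by Theorem~\ref{iffTheorem} its limit point $\tilde{Q}^*$ lies in $\mathcal{V}_{w^\dag}$, so the agent is driven to learn $w^\dag$. The main obstacle I anticipate is the bookkeeping of the first two steps --- keeping track of which rows become tautologies versus strict inequalities and verifying that the block partition really decouples the freely adjustable entries from the single inequality $Hx<v$; the concluding scaling argument (a Gordan-type feasibility fact) is routine.
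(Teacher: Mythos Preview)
Your proposal is correct and follows essentially the same approach as the paper's proof: split~(\ref{PartionedCostCon}) into its top and bottom blocks, discard the tautological rows where $w^\dag(i)=a$, observe that the surviving top-block inequalities can be met for free by choosing the non-$w^\dag$ falsifiable entries large, and reduce the bottom block to the strict system $Hx<v$, which is solved by scaling a direction $x_0$ with $Hx_0<0$. Your write-up is in fact more explicit than the paper's on two points---the reason the $w^\dag(i)=a$ rows are identities (the $i$th row of $(I-\beta P_a)(I-\beta P_{w^\dag})^{-1}$ is $\mathbf{1}_i^T$) and the scaling argument $x=tx_0$---so the bookkeeping you flagged as the main obstacle is already handled.
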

\begin{proof}
We can rewrite (\ref{PartionedCostCon}) as $\tilde{c}_a^{fal} \geq R_a \tilde{c}_{w^\dag}^{fal}+ Y_a c^{true}_{w^\dag}$ and $c_a^{true}\geq M_a \tilde{c}^{fal}_{w^\dag}+N_a c^{true}_{w^\dag}$ for all $a\in\mathcal{A}$. If there exists $\tilde{c}_{w^\dag}^{fal}$ such that $M_a \tilde{c}^{fal}_{w^\dag}$ can be less than any given vector in $\mathbb{R}^{S-S'}$, then $c_a^{true}\geq M_a \tilde{c}^{fal}_{w^\dag}+N_a c^{true}_{w^\dag}$ can be satisfied no matter what the true cost is. We need $c_a^{true}\geq M_a \tilde{c}^{fal}_{w^\dag}+N_a c^{true}_{w^\dag}$ to hold for all $a\in\mathcal{A}$, which means that we need the range space of $[M_{a_1}^T,...,M_{a_A}^T]\in\mathbb{R}^{A(S-S')\times S'}$ to intersect the negative orthant.
By using the fact that $\tilde{c}(i,w^\dag(i))\equiv (\mathbf{1}_i - \beta P_{iw^\dag(i)})^T(I-\beta P_{w^\dagger})^{-1}\tilde{c}_{w^\dagger}$, we can give less stringent conditions. Actually, we only need the range space of  $H=[\bar{M}^T_{a_1},...,\bar{M}^T_{a_A}]\in\mathbb{R}^{(A(S-S')-S')\times S'}$ to intersection the negative orthant. If this is true, then these exists $\tilde{c}_{w^\dag}^{fal}$ such that $c_a^{true}\geq M_a \tilde{c}^{fal}_{w^\dag}+N_a c^{true}_{w^\dag}$ is feasible for all $a\in\mathcal{A}$.

As for conditions $\tilde{c}_a^{fal} \geq R_a \tilde{c}_{w^\dag}^{fal}+ Y_a c^{true}_{w^\dag}$, note that there are $S'\times A$ number of variables  $\tilde{c}_a^{fal},a\in\mathcal{A}$ and that $\tilde{c}_{w^\dag}^{fal}$ has been chosen such that conditions $c_a^{true}\geq M_a \tilde{c}^{fal}_{w^\dag}+N_a c^{true}_{w^\dag}$ are satisfied. One can  choose the remaining variables in $\tilde{c}_a^{fal},a\in\mathcal{A}$ sufficiently large to satisfy $c_a^{true}\geq M_a \tilde{c}^{fal}_{w^\dag}+N_a c^{true}_{w^\dag}$  due to the fact that $\tilde{c}(i,w^\dag(i))\equiv (\mathbf{1}_i - \beta P_{iw^\dag(i)})^T(I-\beta P_{w^\dagger})^{-1}\tilde{c}_{w^\dagger}$.
\end{proof}

Note that $H$ only depends on the transition probability and the discount factor, if an omniscient adversary can only falsify cost signals at states denoted by $\mathcal{S}'$, an adversary can check if the range space of $H$ intersects with the negative orthant of $\mathbb{R}^{A(S-S')}$ or not. If it does, the adversary can mislead the agent to learn $w^\dag$ by falsifying costs at a subset of state space no matter what the true cost is.

\begin{remark}
To check whether the condition on $H$ is true or not, one has to resort to Gordan's theorem \cite{broyden2001theorems}:
Either $Hx<0$ has a solution $x$, or $H^T y =0$ has a nonzero solution $y$ with $y\geq 0$. The adversary can use linear/convex programming software to check if this is the case. For example, by solving
\begin{equation}\label{GordanMin}
\begin{aligned}
\min_{y\in\mathbb{R}^{A(S-S')}}\ \ \ &\Vert H^T y \Vert\ \  s.t.\ \ \ \ \Vert y \Vert=1,\ y\geq 0,
\end{aligned}
\end{equation}
the adversary knows whether the condition about $H$ given in Theorem \ref{PartialStatesAttacks} is true or not. If the minimum of (\ref{GordanMin}) is $0$, the adversary cannot guarantee that, for any given true cost, the agent learns the policy $w^\dag$. If the minimum of (\ref{GordanMin}) is positive, there exists $x$ such that $Hx<0$. The adversary can select $\tilde{c}_{w^\dag}^{fal}=\lambda x$ and choose a sufficiently large $\lambda$ to make sure that conditions (\ref{PartionedCostCon}) hold, which means an adversary can make the agent learn the policy $w^\dag$ by falsifying costs at a subset of state space no matter what the true costs are. 
\end{remark}

\section{Numerical Example}\label{ExamNum}


In this section, we use the application of RL in water reservoir operations to illustrate the security issues of RL. Consider a RL agent aiming to create the best operation policies for the hydroelectric reservoir system described in Fig. \ref{fig:HydroelectricResvoirSystem}. The system consists of the following: (1) an inflow conduit regulated by $\textrm{Val}_0$, which can either be a river or a spillway from another dam; and (2) two spillways for outflow: the first penstock, $\textrm{Val}_1$, which is connected to the turbine and thus generates electricity, and the second penstock, $\textrm{Val}_2$, allowing direct water evacuation without electricity generation. We consider three reservoir levels: $\textrm{MinOperL}$, $\textrm{MedOperL}$, $\textrm{MaxExtL}$. Weather conditions and  the operation of valves are key factors that affect the reservoir level. In practice, there are usually interconnected hydroelectric reservoir systems located at different places which makes it difficult to find an optimal operational policy.

\begin{figure}\centering
\includegraphics[width=0.6\textwidth]{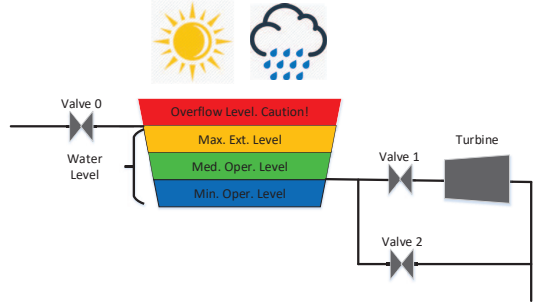}
\caption{A hydroelectric reservoir system.} \label{fig:HydroelectricResvoirSystem}
\end{figure}

For illustrative purposes, we only consider controlling of $\textrm{Val}_1$. Thus, we have two actions: either $a_1$,$\textrm{Val}_1=$ `shut down'; or $a_2$, $\textrm{Val}_1=$ `open'. Hence $\mathcal{A}=\{a_1,a_2\}$. We consider three states which represent three different reservoir levels, denoted by $\mathcal{S}=\{1,2,3\}$ where $1 (2,3)$ represents $\textrm{MaxExtL}$ ($\textrm{MedOperL}$, $\textrm{MinOperL}$,  respectively). The goal of the operators is to generate more electricity to increase economic benefits, which requires the reservoir  to store a sufficient amount of water to generate electricity. Meanwhile, the operator also aims to avoid possible overflows which can be caused by the unexpected heavy rain in the reservoir area or in upper areas. The operator needs to learn a safe policy, i.e., the valve needs to be open at state $1$ so that the cost at $c(1,a_1)$ needs to be high. We assume that the uncertain and intermittent nature is captured by the transition probability given by 
\begin{equation*}
    P_{a_1}=\begin{bmatrix}
    1 & 0 & 0\\
    0.6 & 0.4&0\\ 
    0.1 & 0.5 & 0.4\\
    \end{bmatrix}, P_{a_2}=\begin{bmatrix}
    0.3 & 0.7 & 0\\
    0.1& 0.2 &0.7\\
    0 & 0 & 1
    \end{bmatrix}.
\end{equation*}
And the true cost is assumed to be $c=[30\ -5; 6\ -10; 0\ 0 ]$. Negative cost can be interpreted as the reward for hydroelectric production. Let the discounted factor $\beta$ be $0.8$. The limit $Q$-values learned from $Q$-learning algorithm (\ref{QLAl}) is approximately $Q^*=\begin{bmatrix}
8.71 & -26.6129\\
-15.48 & -27.19\\
-19.12 & -15.30\\
\end{bmatrix}.$ The optimal policy thus is $w^*(1)=a_2, w^*(2)=a_2,w^*(3)=a_1$. Basically, the optimal policy indicates that one should keep the valve open to avoid overflowing and generate more electricity at $\textrm{MaxExtL}$. While at $\textrm{MinOperL}$, one should keep the valve closed to store more water for water supply and power generation purposes. From (\ref{LipCont}), we know that the resulting change in $Q^*$ under malicious falsification is bounded by the change in the cost with a Lipschitz constant ${1}/{(1-\beta)}$. To see this, we randomly generate $100$ falsifications $h\in\mathbb{R}^{3\times 2}$ using \textit{randi(10) *  rand(3,2)} in Matlab. For each falsified cost $\tilde{c}=c+h$, we obtain the corresponding $Q$-factors $\tilde{Q}^*$. We plot $\Vert \tilde{Q}^* - Q^* \Vert$ corresponding with $\Vert \tilde{c}-c \Vert$ for each falsification in Fig. \ref{fig:LipschitzTestFigure}. One can clearly see the bound given in (\ref{LipCont}). The result in Fig. \ref{fig:LipschitzTestFigure} corroborates Theorem \ref{LipContTheo}.

\begin{figure}\centering
\includegraphics[width=0.6\textwidth]{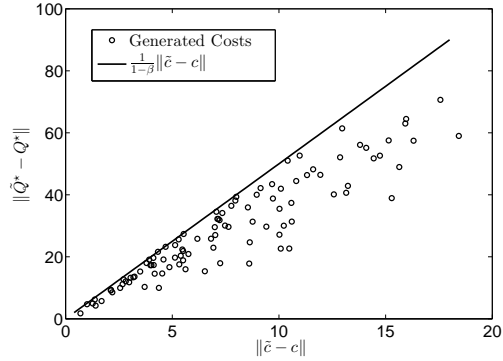}
\caption{$\Vert \tilde{Q}^* - Q^* \Vert$ versus $\Vert \tilde{c}-c \Vert$ with $100$ falsifications.} \label{fig:LipschitzTestFigure}
\end{figure}

Suppose that the adversary aims to mislead the agent to learn a policy $w^\dag$ where $w^\dag(1)=a_1$, $w^\dag(2)=a_2$, $w^\dag(3)=a_1$. The purpose is to keep the valve shut down at $\textrm{MaxExtL}$ which will cause overflow and hence devastating consequences. The adversary can utilize $D_{Q^*}(w^\dag)$ to see how much at least he has to falsify the original cost $\tilde{c}$ to achieve the desired policy $w^\dag$. The value of $D_{Q^*}(w^\dag)$ can be obtained by solving the following optimization problem:
$$
\begin{aligned}
\min_{Q\in\mathbb{R}^{3\times 2}}\ \ \ &\Vert Q - Q^* \Vert\\
s.t.\ \ \ & Q(1,a_1)\leq Q(1,a_2),  Q(2,a_2)\leq Q(1,a_1), Q(3,a_1)\leq Q(3,a_2).
\end{aligned}
$$
The value of $D_{Q^*}(w^\dag)$ is thus $17.66$. By (\ref{LipCont}), we know that to achieve $w^\dag$, the adversary has to falsify the cost such that $\Vert \tilde{c} -c  \Vert \geq (1-\beta)D_{Q^*}(w^\dag)=3.532$. If the actions available to the adversary are to perform only bounded falsification to one state-action pair with bound $3.5$, then it is impossible for the adversary to attain its goal, i.e., misleading the agent to the policy $w^\dag$ targeted by the adversary. Thus, in this MDP-FC, the robust region of $c$ to the adversary's desired policy $w^\dag$ is $3.532$.

\begin{figure}\centering
\includegraphics[width=0.90\textwidth]{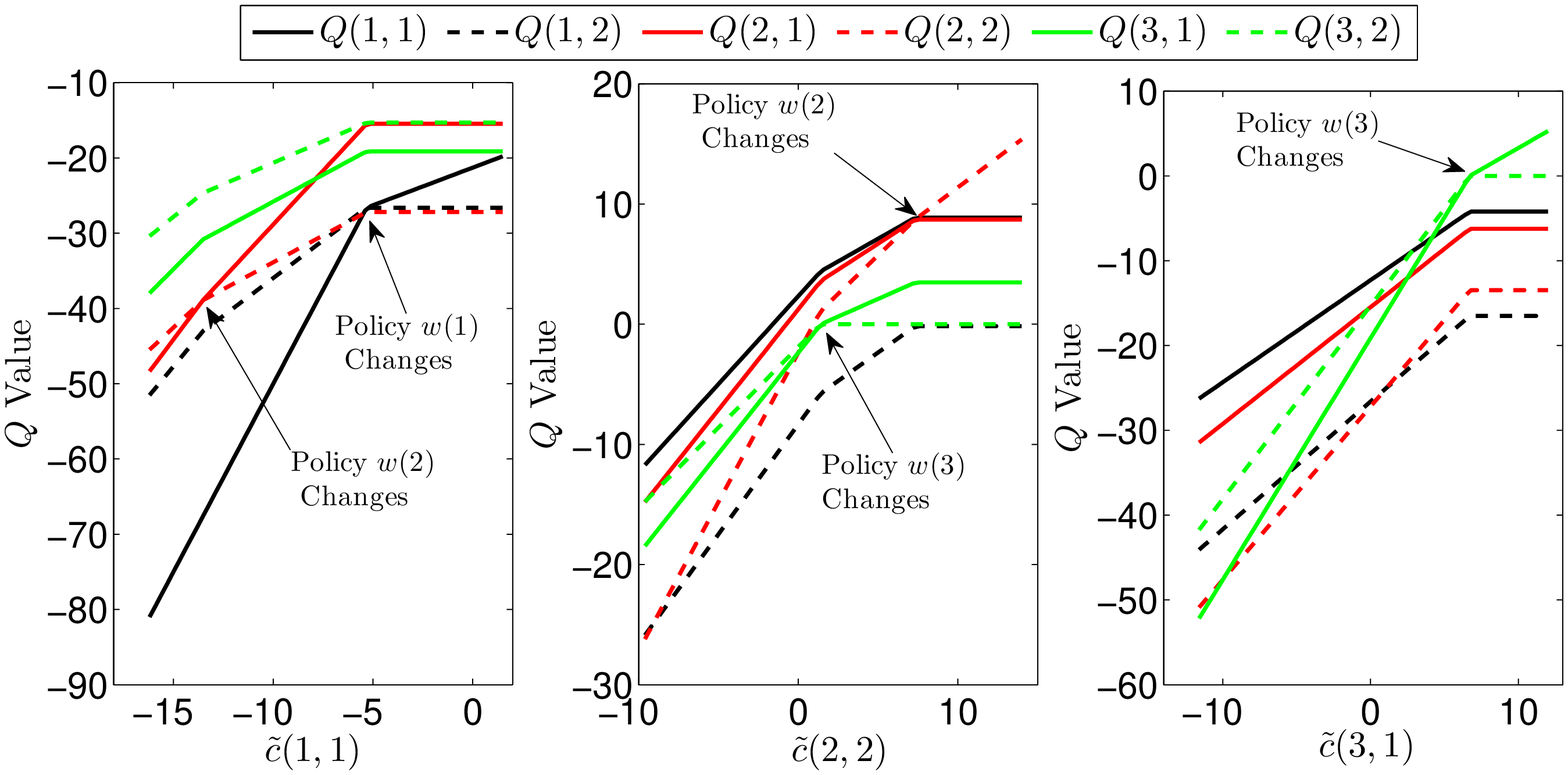}
\caption{The change of the limit $Q$-values when only the cost at one state-action pair is altered. Black line corresponds to state $1$, red line corresponds to state $2$ and green line corresponds to state $3$. Solid (dash) line corresponds to $a_1$ ($a_2$).} \label{fig:OneCostOneQStar}
\end{figure}

In Fig. \ref{fig:OneCostOneQStar}, we plot the change of the limit $Q$-values when only the cost at one state-action pair is falsified while the other components are fixed at $c=[9\ -5; 6\ -10; 0\ 0 ]$. We can see that when the other costs are fixed, for every $j\in\{1,2,3\},b\in\{a_1,a_2\}$ the function $\tilde{c}(i,a)\mapsto[\tilde{Q}^*]_{j,b}$ is piece-wise linear. And the change of slope happens only when the policy changes. This illustrates our argument about the differentiability of the mapping $\tilde{c}\mapsto \tilde{Q}^* $ in Proposition \ref{Diffentialf}. From the first two plots, one can see that changes in costs at one state can deviate the policy at another state. That is when altering the cost at $\textrm{MedOperL}$, an adversary can make the valve open at $\textrm{MinOperL}$ so that the reservoir cannot store enough water to maintain the water supply and generate electricity. When an adversary aims to manipulate the policy at one state, he does not have to alter the cost at this state. Fig. \ref{fig:TwoCostOneQStar1} illustrates Proposition \ref{Diffentialf} when costs corresponding to two state-action pairs are altered.

\begin{figure}\centering
\includegraphics[width=0.67\textwidth]{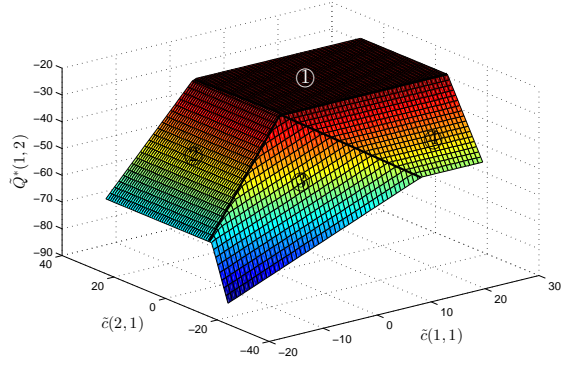}
\caption{the alteration of the limit $Q$-values when only the costs $\tilde{c}(2,1)$, $\tilde{c}(1,1)$ are altered.} \label{fig:TwoCostOneQStar1}
\end{figure}

Furthermore, to illustrate Proposition \ref{Diffentialf} in general cases, i.e., in $\mathbb{R}^{3\times 2}$, suppose $c=[9\ -5; 6\ -10; 0\ 0 ]$, the $Q$-factors learned from $c$ is $Q^*=[ -12.29\ -26.61;\ -15.47\  -27.19;\ -19.12\ -15.30]$. The optimal policy is thus $w^*(1)=a_2,w^*(2)=a_2,w^*(3)=a_1$. By (\ref{FrechetDerivative}) in Proposition \ref{Diffentialf}, the derivative of $f:\mathbb{R}^{3\times 2}$ at $c\in f^{-1}(\mathcal{V}_{w^*})$ is a linearly bounded map $G:\mathbb{R}^{3\times 2}\rightarrow\mathbb{R}^{3\times 2}$
\begin{equation}\label{ExampDeri}
[Gh]_{i,a}=0.8P_{ia}^T\Bigg(\begin{bmatrix}
1 & 0 & 0\\
0 & 1 & 0\\
0 & 0& 1
\end{bmatrix}-0.8\begin{bmatrix}
0.3 & 0.7 & 0\\
0.1 & 0.2 & 0.7\\
0.1 & 0.5 & 0.4\\
\end{bmatrix}\Bigg)^{-1}\begin{bmatrix}
h(1,a_2)\\
h(2,a_2)\\
h(3,a_1)
\end{bmatrix}+h(i,a).
\end{equation}
One can see that $G$ is a constant independent of $c$. Suppose that the adversary falsifies the cost from $c$ to $\tilde{c}$ by $h$, i.e., $\tilde{c}=c+h$ and $h=\begin{bmatrix}
0.6 & -0.2\\
1 & 2\\
0.4 & 0.7\\
\end{bmatrix}$. Then, $Gh=\begin{bmatrix}
3.74 & 3.92\\
4.70 & 5.68\\
4.39 & 4.21
\end{bmatrix}$ by (\ref{ExampDeri}). Thus, $\tilde{c}=c+h=\begin{bmatrix}
9.6 & -5.2\\
7 & -8\\
0.4 & 0.7\\
\end{bmatrix}$. The $Q$-factors learned from $\tilde{c}$ is $\tilde{Q}^*=\begin{bmatrix}
-8.55 & -22.69\\
-10.77 & -21.51\\
-14.73 & -11.08
\end{bmatrix}.$ The resulting policy is still $w^*$. One thus can see that $\tilde{Q}^*=Q^*+Gh$.

If an adversary aims to have the hydroelectric reservoir system operate based on a policy $w^\dag$, the falsified cost $\tilde{c}$ has to satisfy conditions given in (\ref{FalCostConds}). Let the targeted policy of the adversary be $w^\dag(1)=a_1,w^\dag(2)=a_2,w^\dag(3)=a_2$. If the adversary can deceptively falsify the cost at every state-action pair to any value, it is not difficult to find $\tilde{c}$ satisfying (\ref{FalCostConds}). For example, the adversary can first select $\tilde{c}_{w^\dag}=[\tilde{c}(1,a_1)\ \tilde{c}(2,a_2)\ \tilde{c}(3,a_2)]^T$, e.g., $\tilde{c}_{w^\dag}=[3\ 2\ 1]^T$. Then select cost at other state-action pairs following 
$\tilde{c}(i,a)= (\mathbf{1}_i - \beta P_{ia})^T(I-\beta P_{w^\dagger})^{-1}\tilde{c}_{w^\dagger}+\xi$ for $i\in\mathcal{S},a\in\mathcal{A}\backslash\{w^\dag(i)\}$, where $\xi>0$. Then, $\tilde{c}$ satisfies conditions (\ref{FalCostConds}). For example if an adversary choose $\xi=1$, the adversary will have $\tilde{c}=[3\ 10.86;  -1.34\ 2;  0.34\ 1]$. The $Q$-factors learned from $\tilde{c}$ is $\tilde{Q}^*=[15\ 18.46; 8.15\ 7.14; 5.99\ 5;]$. Thus, the resulted policy is the adversary desired policy $w^\dag$. Hence, we say if the adversary can deceptively falsify the cost at every state-action pair to any value, the adversary can make the RL agent learn any policy.

If an adversary can only deceptively falsify the cost at states $\mathcal{S}'$, we have to resort to Theorem \ref{PartialStatesAttacks} to see what he can achieve. Suppose that $\mathcal{S}'=\{1,2\}$ and the adversary desires policy $w^\dag(1)=a_1,w^\dag(2)=a_2,w^\dag(3)=a_2$. Given $\mathcal{S}'$ and $w^\dag$, (\ref{PartionedCostCon}) can be written as
\begin{equation}\label{ExamplePatialStates}
\begin{aligned}
&\begin{bmatrix}
\tilde{c}(1,a_1)\\
\tilde{c}(2,a_1)\\
c(3,a_1) 
\end{bmatrix}
\geq \begin{bmatrix}
    1.0000  &       0   &      0\\
   -2.0762  &  0.8095 &    2.2667\\
   -0.5905  &  -0.4762  &  2.0667
\end{bmatrix}\begin{bmatrix}
\tilde{c}(1,a_1)\\
\tilde{c}(2,a_2)\\
c(3,a_2)
\end{bmatrix},\\
&\begin{bmatrix}
\tilde{c}(1,a_2)\\
\tilde{c}(2,a_2)\\
c(3,a_2) 
\end{bmatrix}
\geq \begin{bmatrix}
    3.5333 &  -0.6667 &  -1.8667\\
   0 &   1.0000 &        0\\
         0 &    0    & 1.0000 \\
\end{bmatrix}\begin{bmatrix}
\tilde{c}(1,a_1)\\
\tilde{c}(2,a_2)\\
c(3,a_2)
\end{bmatrix}.\\
\end{aligned}
\end{equation}
Since the last row in the second equality is automatically satisfied, we have $H=[-0.5906 \ -0.4762]$ whose range space is $\mathbb{R}$ which intersects $(-\infty,0)$. Thus, no matter what values $c(3,a_1)$ and $c(3,a_2)$ are, the adversary can always find $\tilde{c}(1,a_1),\tilde{c}(2,a_2)$ such that $$
c(3,a_1)> M_{a_1}\begin{bmatrix}
\tilde{c}(1,a_1)\\
\tilde{c}(2,a_2)
\end{bmatrix}
+2.0667 \times c(3,a_2).
$$
Next, choose $\tilde{c}(2,a_1)$ and $\tilde{c}(1,a_2)$ by
$$
\begin{aligned}
&\tilde{c}(2,a_1)>\begin{bmatrix}
 -2.0762 &  0.8095 & 2.2667\\
\end{bmatrix}
\begin{bmatrix}
\tilde{c}(1,a_1)\\ 
\tilde{c}(2,a_2)\\ 
c(3,a_2) 
\end{bmatrix}\\
&\tilde{c}(1,a_2)>
\begin{bmatrix}
3.5333&-0.6667&-1.8667\\
\end{bmatrix}
\begin{bmatrix}
\tilde{c}(1,a_1)\\ 
\tilde{c}(2,a_2)\\ 
c(3,a_2) 
\end{bmatrix}.
\end{aligned}
$$
We hence can see that no matter what the true cost is, the adversary can make the RL agent learn $w^\dag$ by falsifying only the cost at sates $\mathcal{S}'=\{1,2\}$. It can also be easily seen that when the adversary can only falsify the cost at state $\mathcal{S}=\{1\}$, he can still make the RL agent learn the policy $w^\dag$ independent of the true cost.

\section{Conclusion and Future Work}
In this paper, a general framework has been introduced to study RL under deceptive falsifications of cost signals where a number of attack models have been presented. We have provided theoretical underpinnings for understanding the fundamental limits and performance bounds on the attack and the defense in RL systems. The robust region of the cost can be utilized by both offensive and defensive sides. A RL agent can leverage the robust region to evaluate the robustness to malicious falsifications. An adversary can use it to estimate whether certain objectives can be achieved or not. Conditions given in Theorem \ref{iffTheorem} provide a fundamental understanding of the possible strategic adversarial behavior of the adversary. Theorem \ref{PartialStatesAttacks} helps understand the attainability of an adversary's objective.
Future work would focus on investigating a particular attack model we have presented in Section \ref{GenAttModel} and developing defensive strategies based on the analytical tools we have introduced. 

%
%
\bibliographystyle{splncs04}
\bibliography{GameSec2019-SecurityofQlearning}

\end{document}